\newtheorem{theorem}{\bf Theorem}
\def\QED{\rule[-1pt]{5pt}{5pt}\par\medskip}
\newenvironment{proof}{{\bf Proof: \ }}{ \hfill \QED}
\tikzstyle{block} = [rectangle, draw, fill=blue!20, text width=4em, text centered, rounded corners, minimum height=2em]
\tikzstyle{oplus} = [circle, draw, fill=red!20, text width=0.3em, text centered, rounded corners, minimum height=0em]
\let\ALP  \mathcal
\newcommand{\beq}[1]{\begin{eqnarray} #1 \end{eqnarray}}
\newcommand{\beqq}[1]{\begin{eqnarray*} #1 \end{eqnarray*}}
\renewcommand{\Re}{\mathbb{R}}
\newcommand{\ex}[1]{\mathbb{E}\left[#1\right]}
\DeclareMathOperator*{\argmax}{arg max}
\title{Adversarial Reinforcement Learning for Observer Design in Autonomous Systems under Cyber Attacks}
\author{
  Abhishek Gupta\\
  Electrical and Computer Engineering\\
  The Ohio State University\\
  Columbus, OH 43210\\
  \texttt{gupta.706@osu.edu} \\
   \and
  Zhaoyuan Yang \\
  Electrical and Computer Engineering\\
  The Ohio State University\\
  Columbus, OH 43210\\
  \texttt{yang.2507@buckeyemail.osu.edu} \\
}
\begin{document}

\maketitle

\begin{abstract}
Complex autonomous control systems are subjected to sensor failures, cyber-attacks, sensor noise, communication channel failures, etc. that introduce errors in the measurements. The corrupted information, if used for making decisions, can lead to degraded performance. We develop a framework for using adversarial deep reinforcement learning to design observer strategies that are robust to adversarial errors in information channels. We further show through simulation studies that the learned observation strategies perform remarkably well when the adversary's injected errors are bounded in some sense. We use neural network as function approximator in our studies with the understanding that any other suitable function approximating class can be used within our framework.
\end{abstract}

\section{Introduction}
In 2009, Air France Flight 447 crashed in Atlantic Ocean, killing everyone on board, due to inconsistency in airspeed readings from multiple sensors. The inconsistency in reading was attributed to ice crystal growth in pitot tubes, which corrupted the measurement of the airspeed. Based on corrupted measurements, the pilots took actions that forced the airplane into deep stall, after which the pilots were unable to regain control over the airplane and it crashed. It should be noted that the flight was on autopilot mode before the pilots were alerted to take control of the aircraft after the on board computer detected inconsistent measurements.

Such incidents in future autonomous systems cannot be avoided. One of the key issue in widespread deployment of future autonomous systems are that some of the sensors may fail to provide the correct information or the sensor measurement is tampered with by a strategic adversary\footnote{We freely assume in this paper that nature could be an adversarial agent.}. Such attacks, called deception attack, may induce the controller to take an action that is detrimental to the system performance. In order to keep the system safe from such attacks, one needs to design filtering scheme (also called observer design in control theory literature) to automatically identify bad data/measurements (which may be based on the system model), and reverse the effects of the attack (that is, compute the correct state of the system from the corrupted measurements). One of the key challenge in this situation is that the observer may not know {\it a priori} if the data/measurements are corrupted or not.

As a result, many authors have studied methods to detect a deception attack on the control system, and have devised dynamic filtering schemes to obtain the true state from corrupted measurements. Kalman filtering and particle methods are some of the techniques. If the measured information lie on a hyperplane, then principal component analysis can also be used to detect bad data \cite{xie2014}. However, these filtering schemes depend strongly on the model of the control system, that is, the system's accurate model is explicitly needed to design the filtering scheme. However, for very complex systems like humanoid robots or autonomous cars, it may be difficult to derive accurate system model from basic physical principles. Consequently, one may be interested in designing a model-free filtering scheme that detect corrupted measurements and automatically corrects the errors introduced by the adversary. We view our paper as a contribution to this important problem.

In our study, we assume that the state measurements have higher dimensions than the actual state of the system. Thus, there is inherent redundancy in the measurement space, allowing the observer to detect low dimensional adversarial attacks. Since the system follows rules of the physics, the measurements lie in a lower dimensional (potentially nonlinear) manifold in the measurement space. Thus, the goal of the observer is to learn a function that projects corrupted measurements (which could lie anywhere in the measurement space) to that lower dimensional manifold.
\begin{figure}[H]
	\centering
    \includegraphics[width=4in, trim = {180 60  200  80}, clip]{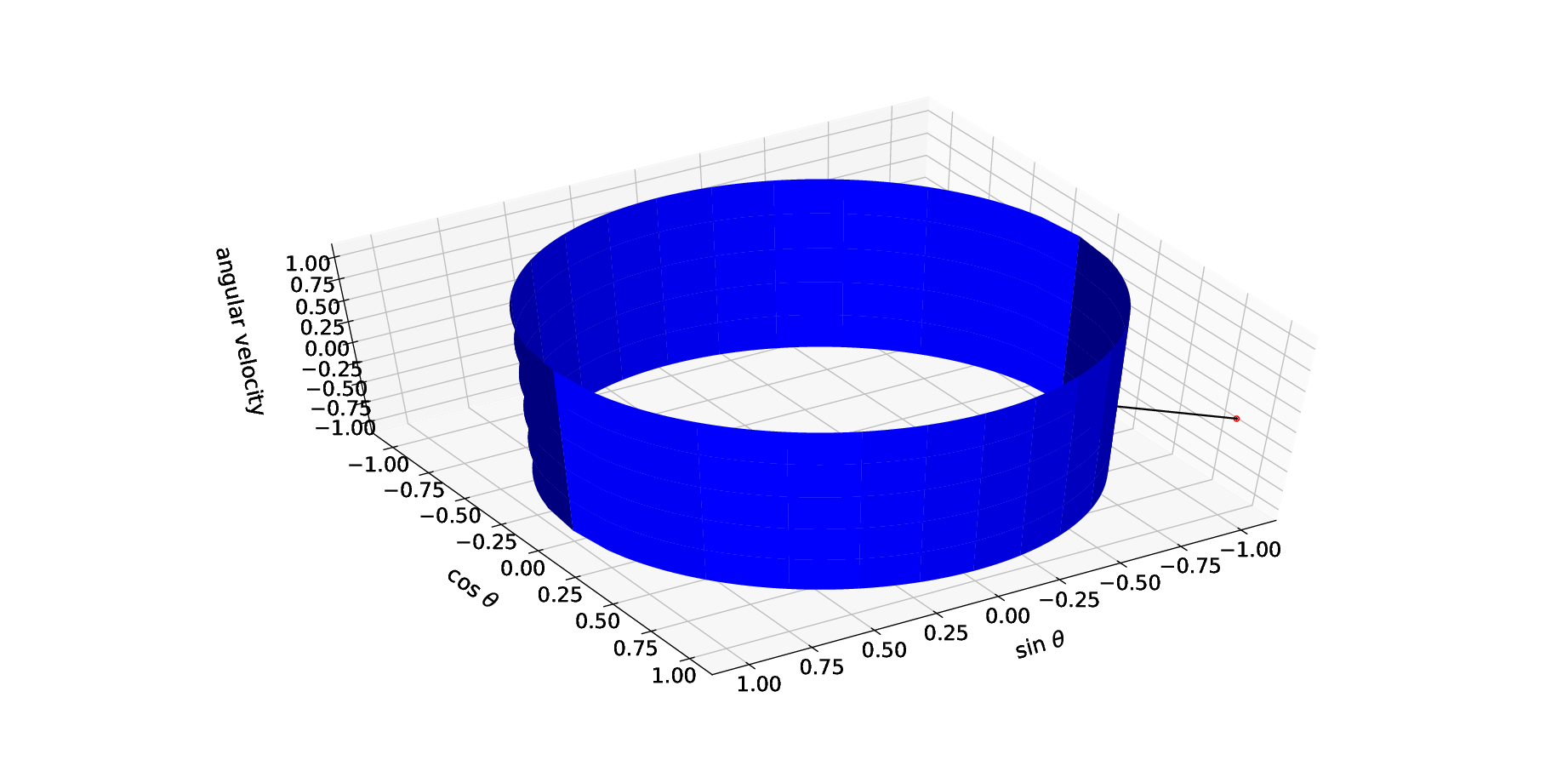}
\caption{\label{fig:manifold} The manifold on which measurements of an inverted pendulum lie. The point in space away from the manifold is the observed measurements corrupted due to adversarial noise.}
\end{figure}
As a concrete example, consider the classical inverted pendulum on a cart problem. Suppose that the inverted pendulum makes an angle $\theta$ with the vertical axis, and we have two measurements $y_1 = \cos(\theta)$ and $y_2 = \sin(\theta)$. Naturally, the measurement space is $[-1,1]^2$, but the actual measurements lie in the set $y_1^2+y_2^2 = 1$. Thus, if the adversary perturbs the measurements slightly, then it is easy to obtain an estimate of the state that is close to the true state. Indeed, if we do not know the system model and the measurement functions accurately, then we need an algorithmic approach to identify the lower dimensional subspace in which the true measurements of the states would lie. We use reinforcement learning, with neural network used as function approximator, to identify such a manifold in this paper. In Figure \ref{fig:manifold}, we show such a manifold for the classical single inverted pendulum in which we plot the manifold in which the position, speed, and angular velocity lie. Our work in intimately connected to several strands of work in the literature as explicated below.

\subsection{Deep Reinforcement Learning}
Advances in deep learning has allowed researchers to use neural networks as function approximators for reinforcement learning in systems with extremely large state and action spaces\cite{DDPG,PPO_ROBOSCHOOL,A3C,DQN,MCTSRLDL}. In these situations, neural network is used to store the value function, learned policy, and/or the Q function. If the action space is continuous, then the reinforcement learning algorithm generally suffers from low data efficiency and unstable performance during training. In benchmarking paper \cite{benchmark}, result shows that among the algorithms being implemented, Truncated Natural Policy Gradient (TNPG), Trust Region Policy Optimization (TRPO) \cite{TRPO}, and Deep Deterministic Policy Gradient (DDPG) \cite{DDPG} are effective methods for training deep neural network policies for continuous control problems. In our study, we are using neural network as observer, which maps continuous corrupted observations to continuous filtered observations.  

We use TRPO for implementing the observer design. Through solving constrained optimization with KL divergence as constraints, the algorithm tends to give monotonic improvement with only little changes in the hyperparameters \cite{TRPO}.

\subsection{Adversarial Example and Reinforcement Learning}
Neural network are vulnerable to adversarial attacks \cite{NN_adv_example,trafficSymbol}. Fast gradient sign method \cite{FGSM} can be used to generate adversarial image noises for convolutional neural network, and authors explain that cause of adversarial example is linear nature of neural network. In the context of reinforcement learning featuring attacks on neural network policies, authors in \cite{FGSMATTACK} use fast gradient sign method \cite{FGSM} to generate adversarial noise to image input, and the corrupted image is used for the control task. It turns out regardless of which environment the policy is trained for or how it is trained, it is possible to significantly decrease the performance of policy \cite{FGSMATTACK}. Besides adding adversarial noise on inputs, adversarial noise can also be added by applying adversarial force on physical systems \cite{ADVRL} or adding  noise to corrupt actions made by controller \cite{ADVACTION}. It turns out adding adversarial noise while training will make reinforcement learning more robust to environmental noise. Imitation learning can also benefit from adding adversarial noise while supervisors making demonstrations \cite{DART}.

\subsection{Neural Network as State Estimator/Observer}
In continuous control problems, neural network is typically used to store policy or value function, but they can also be used as an observer. Neural network observers can be used to estimate unavailable system states, as has been investigated in \cite{NN_observer} and \cite{NN_observer2}. Compared with Kalman filters for sensor failure detection, identification, and accommodation, neural networks' performance for sensor failure detection, identification, and accommodation are better when applied to any dynamic system where system model and filter's model mismatch \cite{NN_KF}. Reinforcement learning can also be used as state estimator to estimate hidden variables and parameters of nonlinear dynamic systems\cite{RLSE}.

\subsection {Deception Attack of Autonomous System} 
Deception attacks are cyber attacks on autonomous systems in which a strategic adversary injects noise in the communication channel between the sensor and the command center. Such attacks on autonomous systems have been investigated in \cite{amin2010, gupta2011, amin2013, teixeira2012}. In complex cyberphysical systems, the nominal plant model and observation model may not capture all the nonlinearities in the measurements. A natural question to ask would be: Can we use the measurement data of the plant to identify the manifold in which the measurements lie?

Motivated by disparate strands of research on deep reinforcement learning and cyber attacks on autonomous systems, we investigate in this paper if deep reinforcement learning can be used to reliably design an observer for autonomous systems in a data-driven manner that can filter the effects of adversarial attacks. We fix the control policy, and use system's performance to train the observer. The adversary is simultaneously trained (as in fictitious play method used in game theory) to minimize the system's performance. In our simulation, we use neural networks as function approximators for both observer map and adversary's policy. However, some prior domain knowledge can be used for coming up with better/reliable function approximators depending on the application.

\section{Problem Formulation}
We consider a Markov decision problem in which the state, action, and actuation noise at time $t$ is denoted by $x_t$, $u_t$, and $w_t$, respectively. We use $\ALP X$ to denote the state space, $\ALP U$ to denote the action space, and $\ALP W$ to denote the actuation noise space. The state evolution equation is given by
\beqq{x_{t+1} = f(x_t,u_t,w_t), t= 1,2,\ldots.}
We assume that $x_0$ is the initial state and the noise distribution is known. Further, we assume that the optimal control policy $\gamma^*:\ALP X\rightarrow\ALP U$ is known and is Lipschitz continuous with Lipschitz constant $L_{\gamma^*}$.  

The state is measured through sensors and sent to the command center via a communication channel. The measurement and communication process opens up the information to sensor noise, sensor failures, packet errors, and cyber-attacks, which corrupt the information being sent to the command center. If the command center uses the corrupted information to make decisions, then the reward to the command center reduces. Thus, there is a need to robustly filter the errors in the information introduced due to exogenous sources to make optimal decisions that maximizes long term discounted reward.

To model this situation, we consider an adversary that takes as input the true state of the system and outputs a corrupted version of the state. Let $v_t$ and $z_t$ be uniformly distributed random variables in the set $[0,1]$ and are independent of each other. Let $g:\ALP X\times[0,1]\rightarrow\ALP X$ be the adversary's (behavioral) strategy and $\ALP G$ be the set of all possible adversary's (behavioral) strategies. The command center takes as input the corrupted state $y_t = g(x_t,v_t)$ and generates an estimate $\hat x_t = h(y_t,z_t)$ using estimation rule $h:\ALP X\times [0,1]\rightarrow\ALP X$. We use $\ALP H$ to denote the set of all possible estimation rules. We note here that the observer can potentially use randomized estimation rules as well (in many games, it is beneficial to act according to randomized rules to introduce robustness to other player's biases)

\begin{figure}[h!]
\begin{center}
\tikzstyle{block} = [draw, fill=blue!20, rectangle, minimum height=3em, minimum width=6em]
\begin{tikzpicture}[node distance=2cm,>=latex']
    \node [block] (controller) at (0,0){Observer};    
    \node [block] (environment) at (4.5,0) {Controller};
    \node [block] (measurements) at (3.5,-1.5) {Environment};
    \node [block] (observer) at (-1.5,-1.5) {Adversary};
    \draw [->] (controller) -- (environment) node[pos=0.5, below] {$\hat x_t = h(y_t,z_t)$};
    \draw [->]  (environment) -- (6,0) --(6,-1.5)--(measurements) node[pos=0.5, below] {$u_t$};
    \draw [->]  (measurements)--(observer) node[pos=0.5, below] {$x_t$};
    \draw [->]  (observer)--(-3.5,-1.5)--(-3.5,0)--(controller) node[pos=0.5, below] {$y_t = g(x_t,v_t)$};
\end{tikzpicture}
\caption{Observer to recover true state. Objective of observer is to estimate true state returned from environment such that controller can maintain its good performance. }
\end{center}
\end{figure}
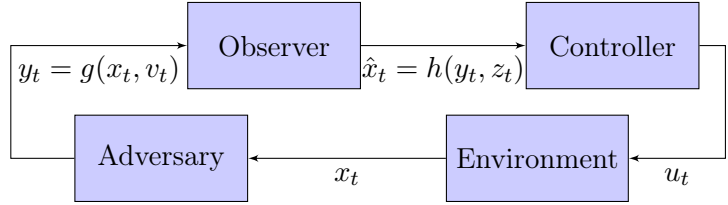

The reward of the command center is given by
\beqq{J(g,h):= J_{x_0,\gamma^*}(g,h)  = \ex{\sum_{t=0}^\infty \alpha^t r(x_t,u_t)\bigg|u_t = \gamma^*(h(g(x_t,v_t),z_t))}.}

The goal of the command center is to maximize the total discounted reward by picking an appropriate estimation map $h$, whereas the goal of the adversary is to minimize the total discounted reward. Thus, we arrive at a zero-sum game, where we are interested in the saddle-point equilibrium $(g^*,h^*)$ such that
\beqq{J(g,h^*)\leq J(g^*,h^*)\leq J(g^*,h).}
In this paper, we refer to the command center and the adversary to be the players of this zero-sum game.
%
%
%
%

\section{Fictitious Play and A Function Approximation Approach}
Fictitious play is a iterative best response method in game theory in which at each time, each player forms an empirical belief on the actions taken by the other player, and plays the best response to that belief. It has been shown to converge in zero-sum game with finite strategy spaces in \cite{brown1951}.

However, in the problem at hand, the strategy spaces of the adversary and the command center, $\ALP G$ and $\ALP H$, are uncountable spaces, even if the state and action spaces of the command center are finite. Moreover, if the state and action spaces are uncountable (as in inverted pendulum), then the strategy spaces are infinite dimensional. Thus, we need to resort to function approximation, wherein we pick a suitable choice of parametrized function spaces to restrict the strategy spaces of the players. We let $\Theta\subset\Re^n$ denote the parameter space of the adversary's parametrized strategy set $\ALP G(\Theta):=\{g_\theta:\ALP X\times[0,1]\rightarrow\ALP X\}$. Similarly, we let $\Xi\subset\Re^m$ denote the parameter space of the command center's parametrized estimator set $\ALP H(\Xi):=\{h_\xi:\ALP X\times[0,1]\rightarrow\ALP X\}$. 

\subsection{Existence of Approximate Nash Equilibrium}
Since we are using function approximation using neural network in this paper, we need to understand if fictitious play can lead to an approximate saddle-point equilibrium in the game formulated above. We have the following result on this issue. 
\begin{theorem}\label{thm:main}
Fix $\epsilon>0$. Suppose that 
\begin{enumerate}
\item For every $h\in\ALP H$, there exists $h_\xi\in\ALP H(\Xi)$ such that
\beqq{|J(g,h)-J(g,h_\xi)| \leq \epsilon\quad \text{ for all } g\in\ALP G.}
\item For every $g\in\ALP G$, there exists $g_\theta\in\ALP G(\Theta)$ such that
\beqq{|J(g_\theta,h)-J(g,h)| \leq \epsilon \quad \text{ for all } h\in\ALP H.}
\end{enumerate}
Then, for every saddle-point equilibrium $(g^*,h^*)$ of the game, there exists a pair of functions $(g_{\theta^*},h_{\xi^*})$ that is $3\epsilon$-Nash equilibrium of the game. 
\end{theorem}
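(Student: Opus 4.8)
The plan is to construct the approximate equilibrium explicitly from a genuine saddle point $(g^*,h^*)$ by replacing each component with its parametrized approximant and then controlling the resulting errors with the triangle inequality. Concretely, I would apply hypothesis (2) with $g=g^*$ to obtain $g_{\theta^*}\in\ALP G(\Theta)$ with $|J(g_{\theta^*},h)-J(g^*,h)|\le\epsilon$ for every $h\in\ALP H$, and apply hypothesis (1) with $h=h^*$ to obtain $h_{\xi^*}\in\ALP H(\Xi)$ with $|J(g,h^*)-J(g,h_{\xi^*})|\le\epsilon$ for every $g\in\ALP G$. The claim will be that $(g_{\theta^*},h_{\xi^*})$ is the desired $3\epsilon$-Nash equilibrium.

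First I would pin down the value at the candidate point. Hypothesis (2) with $h=h_{\xi^*}$ gives $|J(g_{\theta^*},h_{\xi^*})-J(g^*,h_{\xi^*})|\le\epsilon$, and hypothesis (1) with $g=g^*$ gives $|J(g^*,h_{\xi^*})-J(g^*,h^*)|\le\epsilon$; chaining these yields $|J(g_{\theta^*},h_{\xi^*})-J(g^*,h^*)|\le 2\epsilon$. This two-step estimate is what turns the exact saddle value into an approximate one and will be reused for both deviation bounds.

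Next I would bound each player's unilateral deviation. For an arbitrary deviation $g$ of the first player, $J(g,h_{\xi^*})\le J(g,h^*)+\epsilon$ by hypothesis (1), then $J(g,h^*)\le J(g^*,h^*)$ by the saddle-point inequality, and finally $J(g^*,h^*)\le J(g_{\theta^*},h_{\xi^*})+2\epsilon$ by the previous paragraph, so $J(g,h_{\xi^*})\le J(g_{\theta^*},h_{\xi^*})+3\epsilon$. Symmetrically, for an arbitrary deviation $h$ of the second player, $J(g_{\theta^*},h)\ge J(g^*,h)-\epsilon\ge J(g^*,h^*)-\epsilon\ge J(g_{\theta^*},h_{\xi^*})-3\epsilon$, using hypothesis (2), the other saddle-point inequality, and the $2\epsilon$ bound. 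Since these deviations range over all of $\ALP G$ and $\ALP H$ (in particular over $\ALP G(\Theta)$ and $\ALP H(\Xi)$), the pair $(g_{\theta^*},h_{\xi^*})$ is a $3\epsilon$-Nash equilibrium of the game, establishing the theorem.

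The argument is essentially bookkeeping, so there is no deep obstacle; the one place that requires care is the zero-sum sign structure — invoking hypothesis (1) for deviations of the first player and hypothesis (2) for deviations of the second, and using the two saddle-point inequalities in the correct directions. It is also worth emphasizing that the proof genuinely relies on the \emph{uniform} form of the approximation hypotheses (the $\epsilon$ bound holding simultaneously for every strategy of the opponent); a merely pointwise approximation at $g^*$ or $h^*$ would fail to control $J$ along the opponent's deviations, and the chaining in the deviation bounds would break down.
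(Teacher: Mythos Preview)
Your proposal is correct and follows essentially the same route as the paper: construct $(g_{\theta^*},h_{\xi^*})$ by approximating each component of the saddle point via hypotheses (1) and (2), then chain the $\epsilon$-approximation bounds with the two saddle-point inequalities to get the $3\epsilon$ deviation bounds. The only cosmetic difference is that you first isolate the intermediate estimate $|J(g_{\theta^*},h_{\xi^*})-J(g^*,h^*)|\le 2\epsilon$ and reuse it, whereas the paper writes the full chain in one pass for one direction and appeals to symmetry for the other.
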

\begin{proof}
First, construct a pair of functions $(g_{\theta^*},h_{\xi^*})$ that is at most $\epsilon$ away from $(g^*,h^*)$. We need to prove that for any $(g,h)\in\ALP G\times\ALP H$, we have
\beq{J(g_{\theta^*},h_{\xi^*}) \leq J(g_{\theta^*},h)+3\epsilon,\label{eqn:gtheta}\\
J(g_{\theta^*},h_{\xi^*}) \geq J(g,h_{\xi^*})-3\epsilon.\label{eqn:hxi}}
The following two set of inequalities follows from the assumptions:
\beqq{& J(g^*,h^*) \leq J(g^*,h) \leq J(g_{\theta^*},h)+\epsilon,\\
& J(g^*,h^*) \geq J(g_{\theta^*},h^*)- \epsilon \geq J(g_{\theta^*},h_{\xi^*})-2\epsilon.}
Collecting the inequalities above, we get \eqref{eqn:gtheta}. The inequalities in \eqref{eqn:hxi} can also be obtained in a similar fashion.
\end{proof}
Thus, by picking an appropriate function approximator class for the adversary and the observer, we can reach as close to the saddle-point equilibrium as possible.

\subsection{Sequential and Simultaneous Policy Update Rules in Games}
We are now in a position to define fictitious play for our observer design. Let us define $\tilde J(\theta,\xi) := J(g_\theta,h_\xi)$. The observer map and adversary's best response can be trained through sequential update scheme or simultaneous update scheme. First, we can pick $\theta_0$ and $\xi_0$ arbitrarily. The traditional simultaneous and sequential update schemes in zero-sum games are defined as follows:

{\bf Sequential Updates:}
\beqq{ \theta_{k+1} = \begin{cases}\theta_k +\eta \nabla_\theta \tilde J(\theta_k,\xi_k) & \text{ $k$ is odd}\\ \theta_k & \text{ $k$ is even}\end{cases},\quad
 & \xi_{k+1} = \begin{cases}\xi_k -\eta \nabla_\xi \tilde J(\theta_k,\xi_k) & \text{ $k$ is even}\\ \xi_k & \text{ $k$ is odd}\end{cases}.} 

{\bf Simultaneous Updates:}  
\beqq{& \theta_{k+1} = \theta_k +\eta\nabla_\theta \tilde J(\theta_k,\xi_k), \quad & \xi_{k+1} = \xi_k -\eta \nabla_\xi \tilde J(\theta_k,\xi_k).}

Typically, if the function $\tilde J$ is strictly convex in $\xi$ and strictly concave in $\theta$, then the update schemes are known to converge \cite{rosen1965}. In this paper, since we use neural networks as function approximators, the map $\tilde J$ does not satisfy the strict concave/convex conditions. As a result, the fictitious play may not converge.

In the problem at hand, computing $\tilde J$ for every value of $\theta\in\Theta$ and $\xi\in\Xi$ is not possible. Thus, we use TRPO with generalized advantage estimate to compute the next iterate \cite{schulman2015}. In this method, we compute $\theta_{k+1}$ by solving 
\beqq{\theta_{k+1} = \underset{\theta\in\Theta}{\argmax}\;  \frac{1}{N}\sum_{n=1}^{N}\Bigg[\frac{{g_{\theta}}(y_n|x_n)}{g_{\theta_k}(y_n|x_n)}A^{g_{\theta_k}}(x_n,y_n)\Bigg]\\
\textrm{ subject to } \frac{1}{N}\sum_{n=1}^{N}\Big[D_{KL}\big(g_{\theta_k}(\cdot|x_n)\big\|{g_{\theta}}(\cdot|x_n)\big)\Big] \leq\delta,  }


where $x_1$ is initialized randomly, and the generalized advantage estimate $A^{g_{\theta_k}}$ is given by
\beqq{A^{g_{\theta_k}}(x_n,y_n) =  (1-\lambda)\sum_{l=0}^\infty (\lambda\alpha)^l \Big(r(x_{n+l},u_{n+l})+\alpha V_{\zeta_k}(x_{n+l+1})-V_{\zeta_k}(x_{n+l})\Big),\\
\text{with } V_n=\sum_{t=0}^{\infty} \alpha^t r(x_{n+t},u_{n+t}) \text{ and } V_{\zeta_k} = \underset{\zeta}\min\Big(V_n(x_n) - V_{\zeta}(x_n)\Big)^2.}


In the computation of the advantage estimate for computing $\theta_{k+1}$, we fix $\xi_k$. A similar approach is adopted for computing the next iterate $\xi_{k+1}$ from fixed $\xi_k$ and $\theta_k$. 

%

\section{Simulation Results on Inverted Pendulum and Key Observations} 
We now demonstrate performance of the reinforcement learning for learning adversary's policy and observer's map in three different settings: (a) performance of adversary without an observer; (b) performance of adversary and observer in sequential update setting; (c) performance of adversary and observer in simultaneous update setting. In following sections, Trust Region Policy Optimization from OpenAI Baselines \cite{openaibaselines} is used to optimize the players' policies. 
\begin{figure}[bth]
\centering
\scalebox{0.6}{
\begin{tikzpicture}[thick,>=latex,->]
\begin{scope}
\clip(-7,3.7) rectangle (7,-2.85);
\draw[dashed] (0,0)  circle (3.24cm);
\filldraw[white] (-4.3,0) rectangle (4.3,-4.3);
\draw[double distance=1.6mm] (0,0) -- (1,3) node[midway,xshift=4mm,yshift=2mm]{};
\draw[fill=white] (1.2,-1.0) -- (0.5,0) arc(0:180:0.5) -- (-1.2,-1.0) -- cycle;

\draw[draw=black,fill=white] (0, 0) circle circle (.3cm);
\draw[draw=black,fill=white] (1,3) circle circle (.3cm);

\draw[pattern=north east lines] (-1.4,-1.5) rectangle (1.4,-1);
\draw[draw=black,fill=white] (0.8,-1.7) circle circle (.4cm);
\draw[draw=black,fill=white] (-0.8,-1.7) circle circle (.4cm);
\draw[draw=black,fill=white] (-0.8,-1.7) circle circle (.05cm);
\draw[draw=black,fill=white] (0.8,-1.7) circle circle (.05cm);
\draw[dashed,-] (0,-2.7) -- (0,3.5);
\draw[dashed,-] (-7,-2.1) -- (7,-2.1);
\node at (.2,1.6) {\Large $\theta$};   
\draw[->] (1.6,-1.5) -- (2.6,-1.5) node[below]{\Large $u_t$};
\draw[->] (-2.6,-2.5) -- (0,-2.5) node[pos = 0.6, below]{\Large $x$};
\draw[->] (-2.6,-2.5) -- (-2.6,3.7);
\end{scope}
\end{tikzpicture}}
 \caption{\label{fig:cartpole} An inverted pendulum, where $x$ is the displacement from the origin, $\theta$ is angle from the vertical, and $u_t$ is the force on the cart.}
\end{figure}
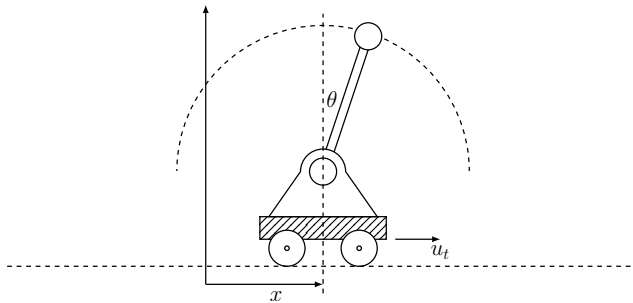

\subsection{Environment}
We use inverted pendulum from Roboschool \cite{PPO_ROBOSCHOOL} as simulation environment. Environment is shown in Figure \ref{fig:cartpole}. The inverted pendulum's state space has three dimensions -- location $x$, velocity $v_x$, and the angle from the vertical $\theta$, and the rotational speed $\dot\theta$. The measurement space has 5 dimensions -- location $x$, speed $v_x$, $\cos \theta$, $\sin \theta$ and angular velocity $\dot{\theta}$. Reward for the system is $r(\theta) = 0.08- |\sin \theta|$. Since $\theta$ is the only payoff relevant state, we assume that the adversary attacks the measurement $\sin \theta$. The control action on the pendulum is the force $F$. Roboschool provides pre-trained model for different environments, and we use pre-trained model of inverted pendulum from OpenAI to analyze performance of adversary agent. 

In what follows, each episode ends at either 1000 time steps or when $|\theta|>0.2$, whichever is earlier. Thus, when the adversary learns the policy that can push $|\theta|>0.2$ in 200 time steps, then the episode ends at 200. 


\subsection{Performance of Controller with Adversary and without Observer}
Among all measurements, $\sin \theta$ is the most crucial measurement, since it relays information about the angle of the pendulum. Thus, we let the adversarial agent to corrupt $\sin \theta$ through an adversarial noise. Reward for adversarial agent is defined as  $|\sin \theta| -0.08$. We assume that the adversarial noise is bounded as follows. We compute the standard deviation $\sigma$ of $\sin \theta$ in the noiseless environment in 1000 trials, and we bound the adversarial noise magnitude to be between $[-2\sigma,2\sigma]$. We empirically found $\sigma = 0.00789366$.

Without observer, adversary learns a policy that reduces the performance of controller through generating adversarial noise for $\sin \theta$. Performance of adversary in each episode is shown in Figure \ref{fig:no_observer}. Each episode has at most 1000 time steps. Also note that since we have 1000 time steps in each episode, if $\theta$ is small all the time (which implies the inverted pendulum is perfectly balanced by the controller), then the total accumulated reward by the controller at the end of the episode is $0.08*1000 = 80$. Naturally, the minimum possible reward the adversary can earn in each episode is also -80. The maximum reward adversary can earn is 0, which it can secure if it induces the pendulum to fall in the first time step itself (if pendulum falls in first step, then the reward for adversary will be zero since episode has 0 time steps). 
%

\begin{figure}[H]
	\centering
    \includegraphics[width=5in]{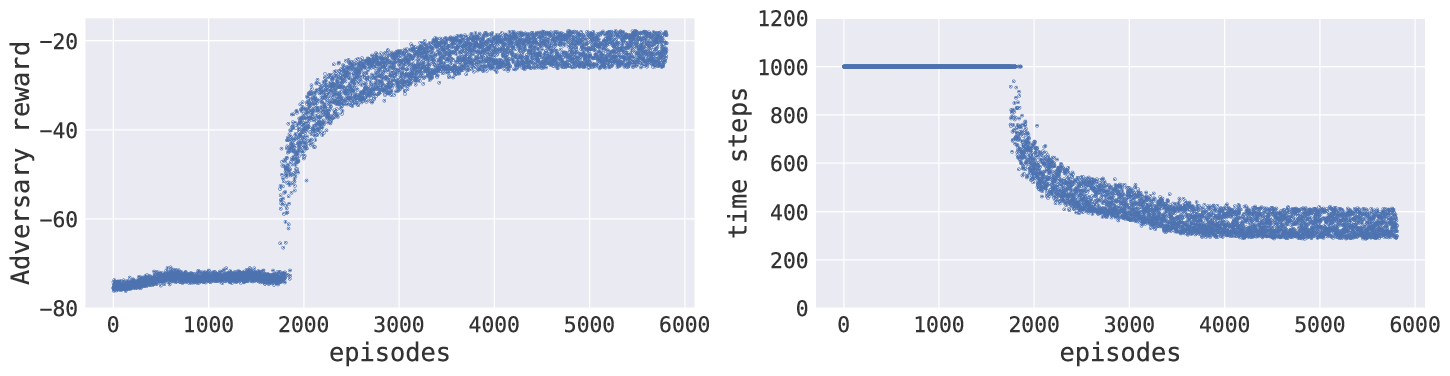}
\caption{\label{fig:no_observer} The left figure depicts the evolution of reward of the adversary while learning, and the right figure depicts the number of time steps in each episode.}
\end{figure}
We observe in the Figure \ref{fig:no_observer} that in the absence of the observer (or the observer is fixed to be an identity map), the adversary was able to learn the worst case adversarial noise that leads to the worst performance. In the initial learning phase (episodes 1-1900), the adversary's noise did not affect the final control action very much and its accrued reward in each episode was roughly equal to $-1000\times 0.08$. After about 1900 episodes, the adversary's performance improves dramatically as it learns how to sway the observation in a manner that reduces the controller's performance.

\subsection{Fictitious Play with Adversary and Observer}
We count 15000 time steps as one iteration in the fictitious play and perform both sequential updates and simultaneous updates for 5 trials. Bounds for adversarial noise are set to $2\sigma$, $4\sigma$, and $8\sigma$ in the following analysis. The results are as follows:

\textbf{Sequential Update}: We run 5 trials for each bound on adversarial noise magnitude. Since adversary and observer learn alternatively during the simulation, each player updates its neural network parameters for 15000 steps against the fixed policy of the other player, and then the other player starts learning. Results of all 5 trials are similar, so we only show one of the results as a representative learning behavior for varying bounds on adversarial noise in Figure \ref{fig:sample_seq1}.

\begin{figure}[H]
\centering
\includegraphics[width=4in]{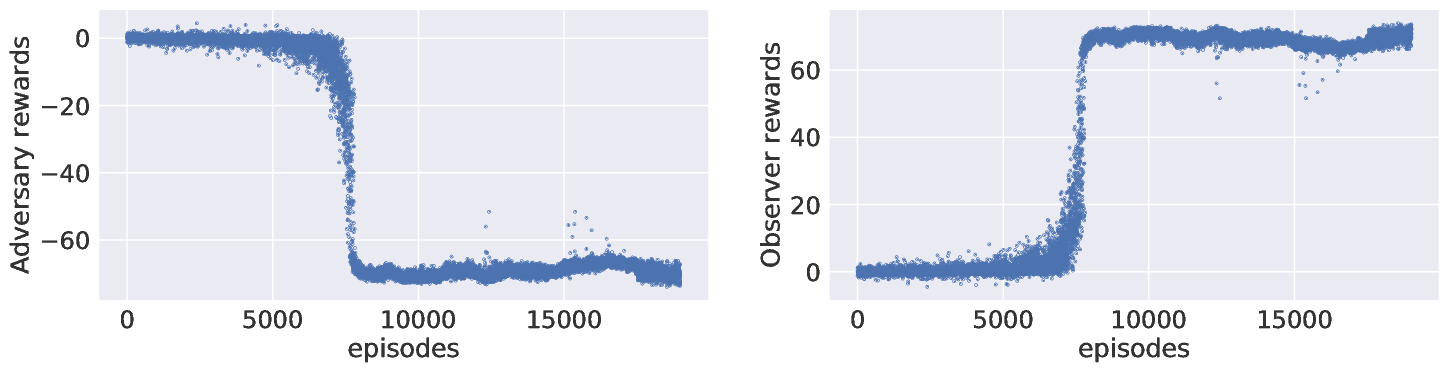}\\
\includegraphics[width=4in]{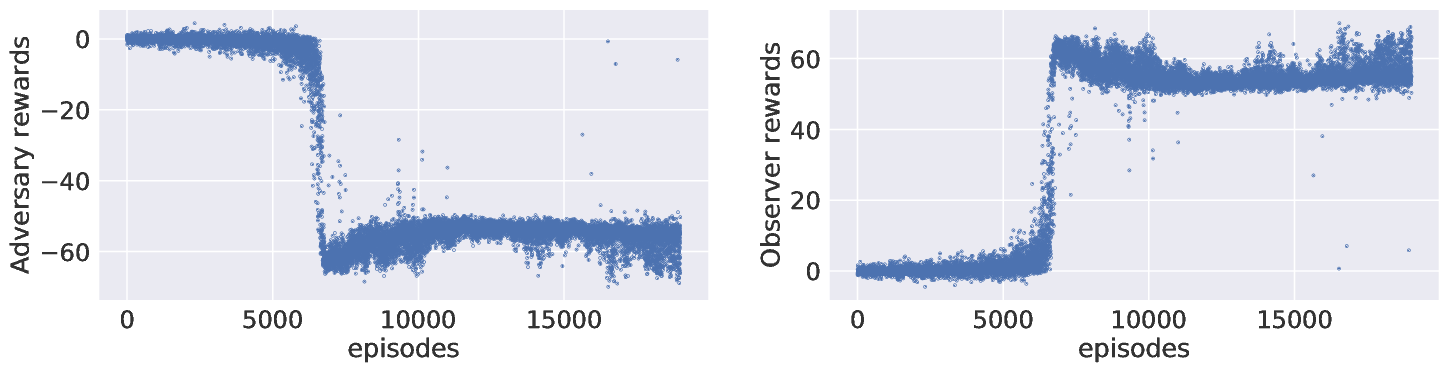}
\includegraphics[width=4in]{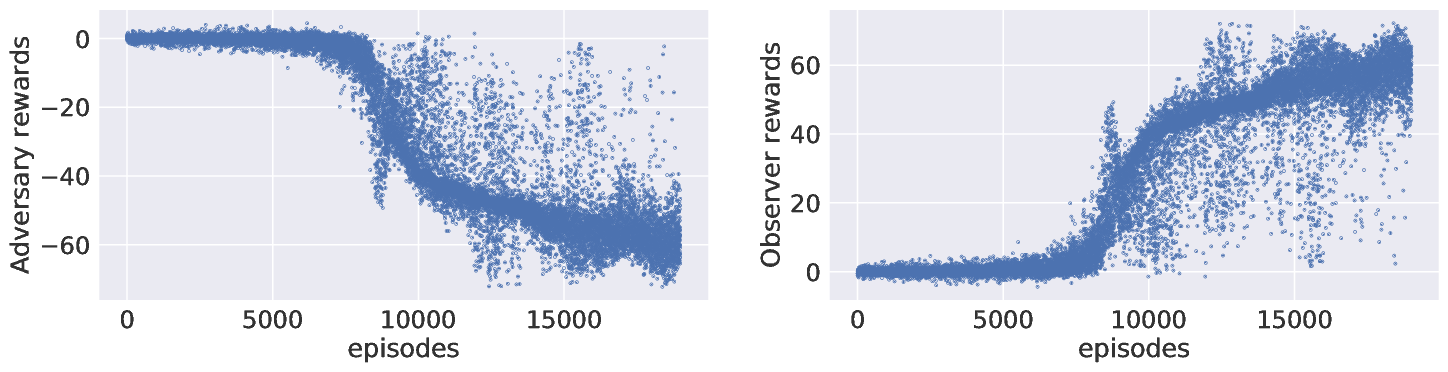}
\caption{\label{fig:sample_seq1}
The plot of cumulative rewards vs. episodes for the adversary and the observer. A sample trial of sequential update with adversarial noise in $[-2\sigma, 2\sigma]$ in the top plot, $[-4\sigma, 4\sigma]$ in the middle plot, and $[-8\sigma, 8\sigma]$ in the bottom plot.}
\end{figure}


\textbf{Simultaneous Update}: Similar to sequential update, we ran 5 trials, and since the results were similar, we present the representative plots below in Figure \ref{fig:sample_sim1} with different bounds on adversarial noise. 

\begin{figure}[H]
\centering
\includegraphics[width=4in]{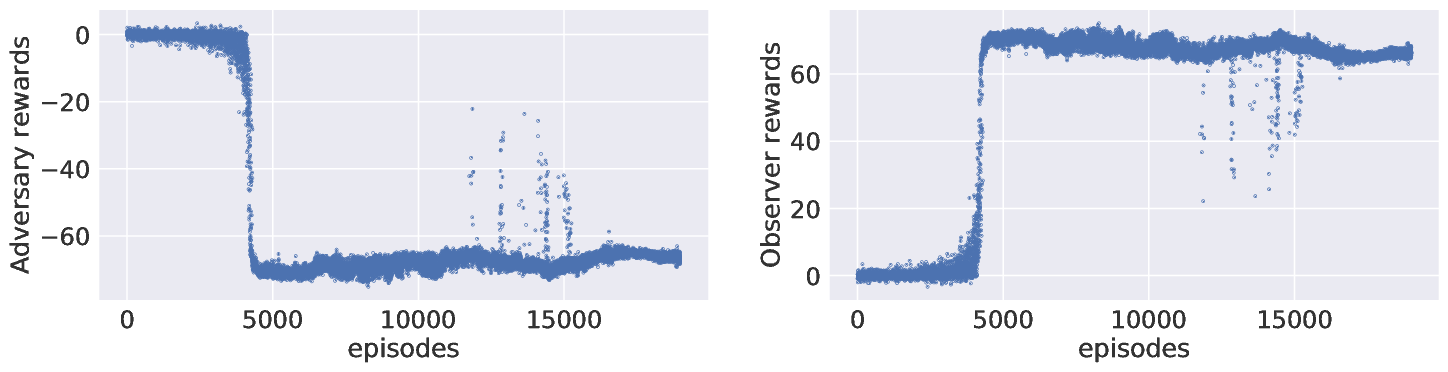}\\
\includegraphics[width=4in]{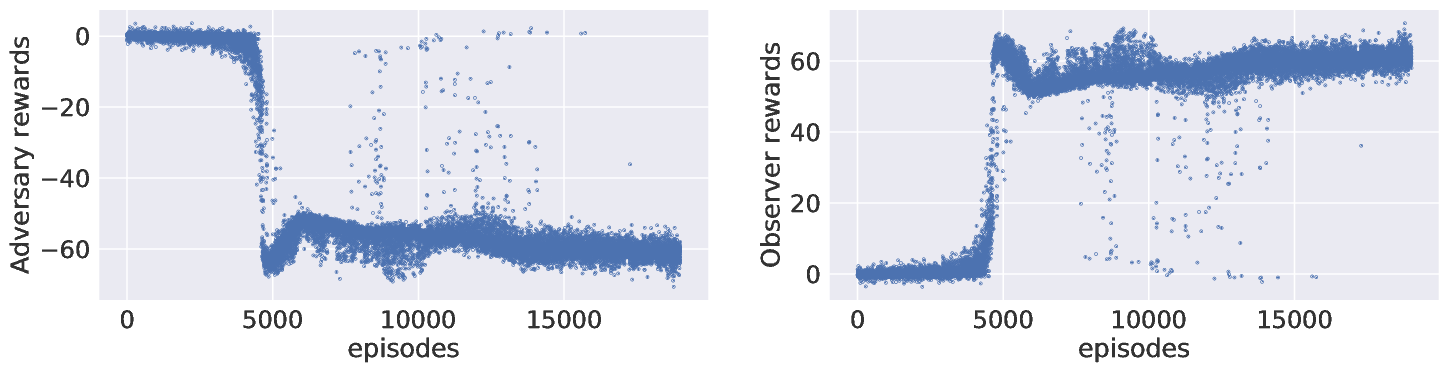}\\
\includegraphics[width=4in]{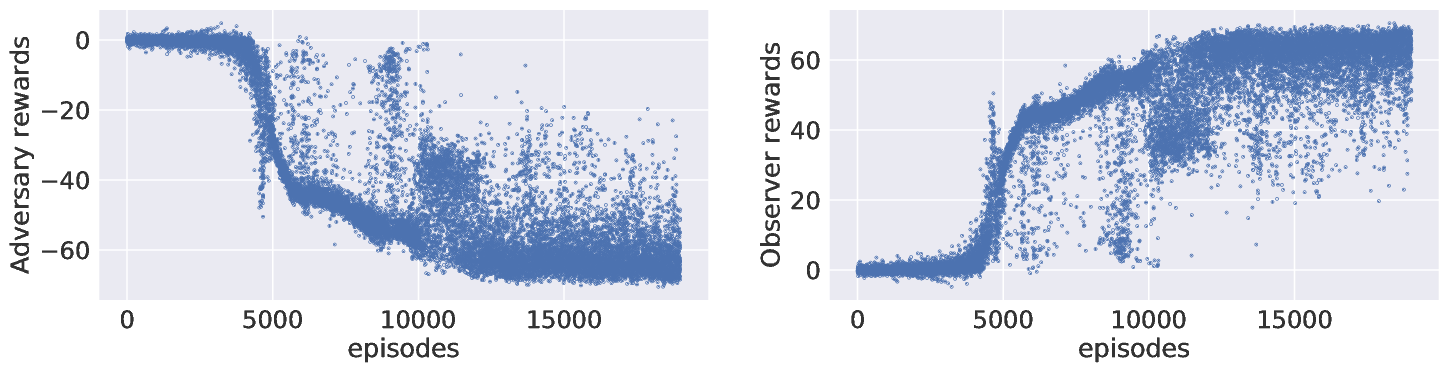}
\caption{\label{fig:sample_sim1}
The plot of cumulative rewards vs. episodes for the adversary and the observer. A sample trial of simultaneous update with adversarial noise in $[-2\sigma, 2\sigma]$ in the top plot, $[-4\sigma, 4\sigma]$ in the middle plot, and $[-8\sigma, 8\sigma]$ in the bottom plot.}
\end{figure}

An interesting observation in the plots above is that for small values of adversarial noise, the observer is in most cases able to reverse the corruption introduced by adversary. Consequently, the payoff to the system in each episode is close to the maximum attainable payoff (about $80$) after the observer is trained. We note here that the observer need not compute the uncorrupted state exactly; as far as the observer is able to compute the estimate that, when fed into the controller, takes the same action as in the uncorrupted state case, the final payoffs are the same. The abrupt jumps or drops in the payoffs of the players in certain episodes during the learning phase are due to the other player learning the best response strategy very well in the corresponding iteration of the fictitious play. If the adversary is strong enough to corrupt the measurements by large quantity, then the data-driven observer design using our method is not successful. We intend to address this very interesting problem in our future research.

\section{Conclusion}
Autonomous systems must be resilient to cyber attacks and sensor failures. In modern systems, this is typically attained through introducing redundancy in measurements and designing an observer, which takes into account the physical model of the plant and determines an estimate of the state based on the measurements. Kalman filtering scheme and its variants presents a method to do it if the model of the system is known. In our work, we studied the problem of observer design through measurement data and in the presence of an adversary for a simple autonomous system--an inverted pendulum. We believe that the methods developed in this paper can be extended to more complex systems, which we will investigate in our future work. 

%

\bibliographystyle{plain}
\bibliography{bibfile}

\end{document}